\documentclass[10pt,twocolumn,letterpaper]{article}

\usepackage[pagenumbers]{cvpr} 

\usepackage{bbm}
\renewcommand{\mathbb}{\mathbbm}
\usepackage{todonotes,soul}
\usepackage{amsthm}
\usepackage{xcolor}
\usepackage{multirow,subcaption}  
\newtheorem{proposition}{Proposition}
\usepackage{csquotes}\MakeOuterQuote{"}
\renewcommand{\paragraph}[1]{\par\noindent\textbf{#1}}
\usepackage{comment}\excludecomment{exclude}
\usepackage{soul}

\definecolor{cvprblue}{rgb}{0.21,0.49,0.74}
\usepackage[pagebackref,breaklinks,colorlinks,allcolors=cvprblue]{hyperref}

\def\paperID{*****} 
\def\confName{CVPR}
\def\confYear{2026}

\title{Bag of Bags: Adaptive Visual Vocabularies 
for Genizah Join Image Retrieval}

\author{Sharva Gogawale, Gal Grudka, Daria Vasyutinsky-Shapira,\\ Omer Ventura, Berat Kurar-Barakat, Nachum Dershowitz\\
School of Computer Science and AI, Tel Aviv University, Ramat Aviv, Israel\\
{\tt\small\{sharvag,galgrudka,omerventura\}@mail.tau.ac.il}, 
{\tt\small\{dariashap,berat,nachum\}@tau.ac.il}
}

\begin{document}
\maketitle

\begin{abstract}
A join is a set of manuscript fragments identified as originally emanating from the same manuscript. We study manuscript join retrieval: Given a query image of a fragment, retrieve other fragments originating from the same physical manuscript. 
We propose Bag of Bags (BoB), an image-level representation that replaces the global-level visual codebook of classical Bag of Words (BoW) with a fragment-specific vocabulary of local visual words. 
Our pipeline trains a sparse convolutional autoencoder on binarized fragment patches, encodes connected components from each page, clusters the resulting embeddings with per-image $k$-means, and compares images using set-to-set distances between their local vocabularies. 
Evaluated on fragments from the Cairo Genizah, the best BoB variant (viz.\@ Chamfer) achieves Hit@1 of 0.78 and MRR of 0.84, compared to 0.74 and 0.80, respectively, for the strongest BoW baseline (BoW-RawPatches-$\chi^2$), a 6.1\% relative improvement in top-1 accuracy. 
We furthermore study a mass-weighted BoB-OT variant that incorporates cluster population into prototype matching and present a formal approximation guarantee bounding its deviation from full component-level optimal transport. 
A two-stage pipeline using a BoW shortlist followed by BoB-OT reranking provides a practical compromise between retrieval strength and computational cost, supporting applicability to larger manuscript collections. The code and dataset are available at \url{https://github.com/TAU-CH/midrash_bob}.

\end{abstract}

\section{Introduction}
\label{sec:intro}

The Cairo Genizah is a unique source of preserved fragmented medieval manuscripts, accumulated between the 11th and  19th centuries in the Ben Ezra Synagogue in Old Cairo and now dispersed across dozens of libraries and private collections worldwide. 
The manuscripts are mostly written in Hebrew, Aramaic, and Judeo-Arabic. 
Their study over the past century and a quarter has had enormous impact on our knowledge of medieval Mediterranean history, literature, commerce, and culture.

To facilitate more thorough study of the Genizah texts, there is a strong need to group related fragments and reconstruct as much as possible of their original manuscripts. 
Over decades, scholars have spent considerable effort manually identifying such groups, known as \textit{joins}: manuscript fragments that originally belonged to the same physical codex or scroll, but have since been separated, requiring researchers to examine them in person across institutions. Manual identification remains the gold standard for discovering joins; however, it does not scale well and cannot be applied to the entire collection. 

Automating join retrieval is a challenging computer vision task because many manuscript fragments are often severely damaged, incomplete, and stained. 
The difficulty arises from the fact that fragments originating from the same manuscript may share only subtle, handwriting-style-specific visual patterns, while fragments from different manuscripts can still appear globally similar in aspects such as background texture, aging, and   degradation level.

Standard retrieval architectures frequently rely on Bag of Words (BoW) paradigms. BoW maps images to histograms over a shared global codebook. It summarizes all fragments with respect to the same dictionary. However, this global quantization is fundamentally limited for paleographic analysis, as it causes critical image-specific handwriting style information to be lost during quantization. Two images with identical codeword frequencies but different geometric character style distributions are assigned distance zero by BoW, yet they may have been written by entirely different hands. Conversely, fragments from the same manuscript may differ in lighting and layout, causing BoW to miss them. BoW's pooling over a global codebook erases the image-specific handwriting style signal.


Key contributions are as follows:
\begin{itemize}
\item We propose Bag of Bags (BoB), a fragment-specific representation for fragment retrieval that replaces a shared global visual-word codebook with per-fragment vocabularies of local visual words, constructed from sparse autoencoder embeddings of connected-component 
patches. 

\item We instantiate BoB with three set-to-set distances---bipartite 
assignment, symmetric Chamfer, and mass-weighted optimal transport (OT)---and show empirically that soft nearest-neighbor matching (Chamfer) is 
most robust for partially damaged fragments, while weighted OT provides 
a principled mass-aware alternative with formal approximation guarantees.

\item We present a two-stage retrieval pipeline in which BoW-Cosine 
generates candidates and BoB-OT reranks only the shortlist, reducing 
online cost to $\mathcal{O}(M \cdot K^3)$ independent of gallery size 
while closely preserving the accuracy of exhaustive BoB-OT. 

\item We provide a detailed ablation on vocabulary size, encoder 
dimension, activation sparsity, and component normalization, showing 
that BoB's page-adaptive vocabulary structure contributes retrieval 
gains beyond what the encoder quality alone explains.
\end{itemize}

\section{Related Work}
\label{sec:rl}
The Cairo Genizah, stored in the  Ben Ezra Synagogue in Fustat (Old Cairo) and discovered and retrieved at the end of the 19th century, is a large and historically significant collection of manuscript fragments dating mainly from the 10th to the 15th centuries. 
These documents were subsequently dispersed across more than fifty libraries and collections worldwide~\cite{penn}.  
A fundamental challenge facing Genizah scholarship is that many leaves were discovered as loose or damaged fragments. 
Identifying joins
has long been a central scholarly task.  The results of these manual efforts can be found on the site of the \href{https://fjms.genizah.org}{Friedberg Genizah Project}, currently in transition to the \href{https://www.nli.org.il/en/discover/manuscripts/hebrew-manuscripts}{National Library of Israel}. 
Though thousands of joins have already been documented, substantial manual effort is still required
\cite{penn}. 
This motivates automated retrieval systems that, given a query fragment, rank likely join candidates for expert inspection.  

Automated identification of joins was first explored in~\cite{wolf}, who proposed a method for finding candidate matches between manuscript fragments. Their approach represents each image using local visual features and a bag of features representation, and then compares pairs of leaves using a learned similarity measure to determine whether they may belong to the same original work. However, this method requires computationally expensive pairwise comparisons during retrieval. 
Analogous fragment assembly problems arise in other historical document collections, including ancient papyri, where AI-based approaches have been surveyed for matching dispersed
fragments via visual texture and learned similarity measures~\cite{pap,Isabelle}.

More broadly, historical document retrieval has been studied through word and pattern spotting, where the goal is to retrieve repeated words or writer-specific patterns without full OCR. 
An efficient learning-free word spotting method based on local binary patterns was proposed for handwritten historical documents, demonstrating robustness to degradation and variations across multiple writers~\cite{lbp}.
Bag of Visual Words (BoVW) and Bag of Words (BoW) models constitute a fundamental paradigm in image retrieval~\cite{vg}. These approaches quantize local image descriptors into a shared codebook, representing each image as a histogram of visual words. 
Extending this to documents, Shekhar showed that recognition-free retrieval is possible by quantizing local descriptors and comparing fixed-dimensional histograms~\cite{shekhar}.
A two-stage system for word and pattern spotting in historical manuscripts combines BoVW for candidate generation with a spatial verification step based on the Longest Weighted Profile algorithm, enabling retrieval from a single query~\cite{spotit}. These methods are closely related, but they target repeated words or local patterns rather than page-level manuscript identity. This global quantization can be challenging for fine-grained scribal analysis: fragments with identical global word counts but entirely different geometric distributions in feature space receive identical representations. Optimal transport offers a more flexible alternative for comparing visual distributions. Earth Mover’s Distance (EMD) utilizes optimal transport to compare feature distributions~\cite{earth}, recently extended to match continuous GMMs~\cite{gmm}. 
We combine these threads: instead of assigning all pages to one global codebook, we learn a fragment-specific vocabulary and compare fragments through set-to-set matching. 

\begin{figure}[t]
  \centering
    \includegraphics[width=\linewidth]{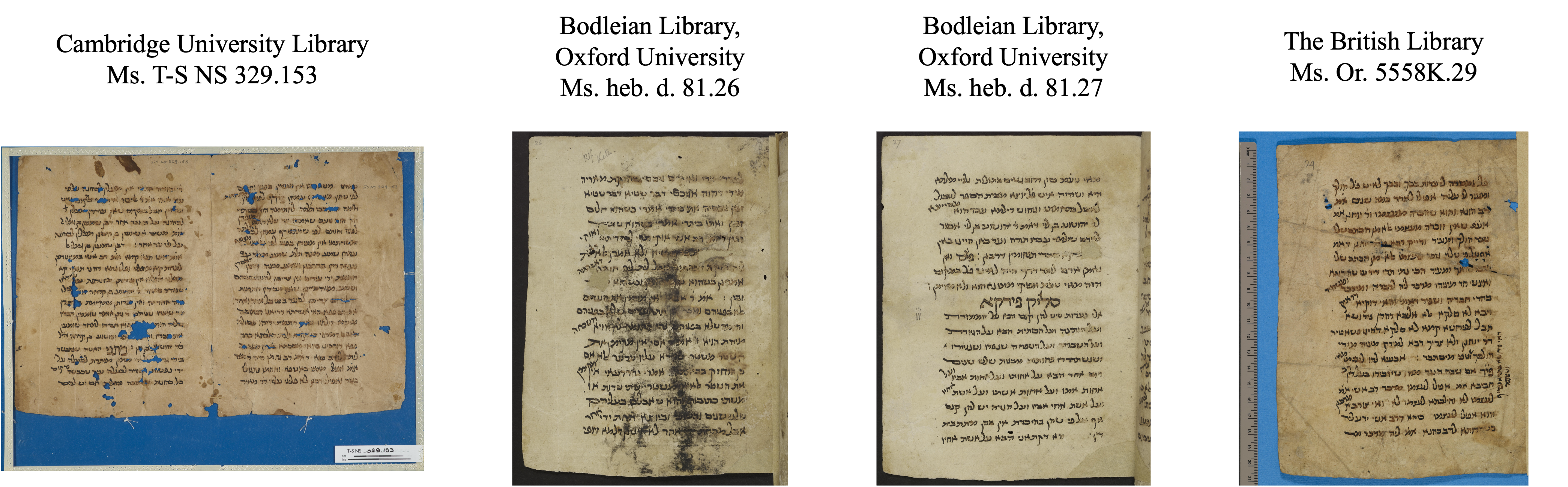}
    \includegraphics[width=\linewidth]{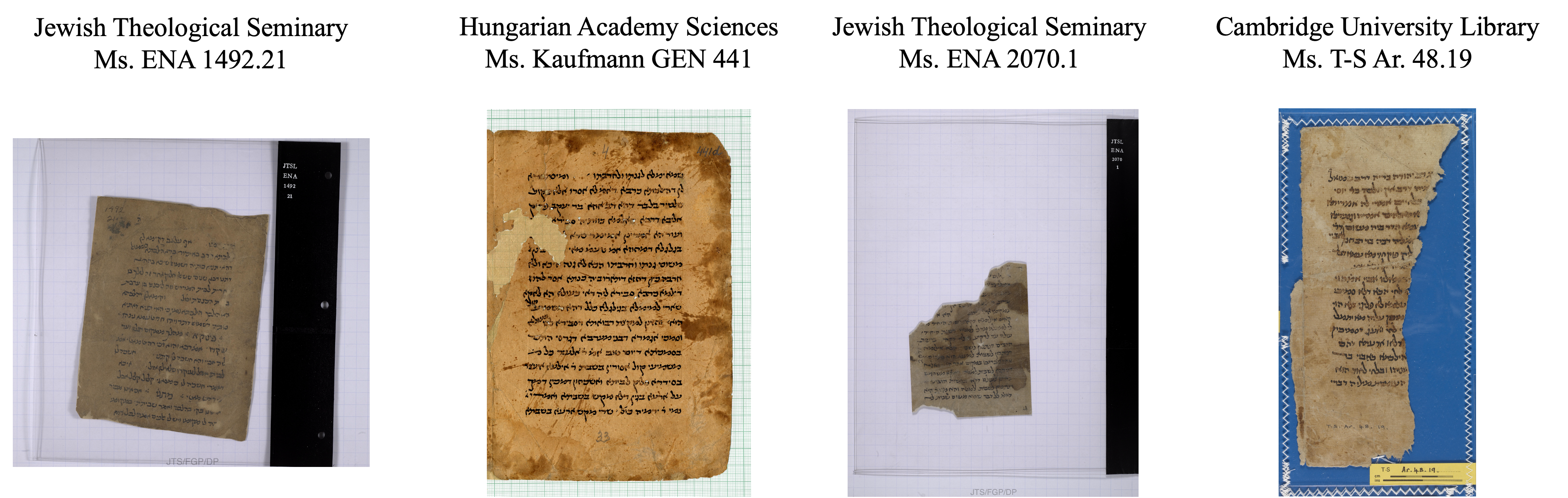}
  
  \includegraphics[width=\linewidth]{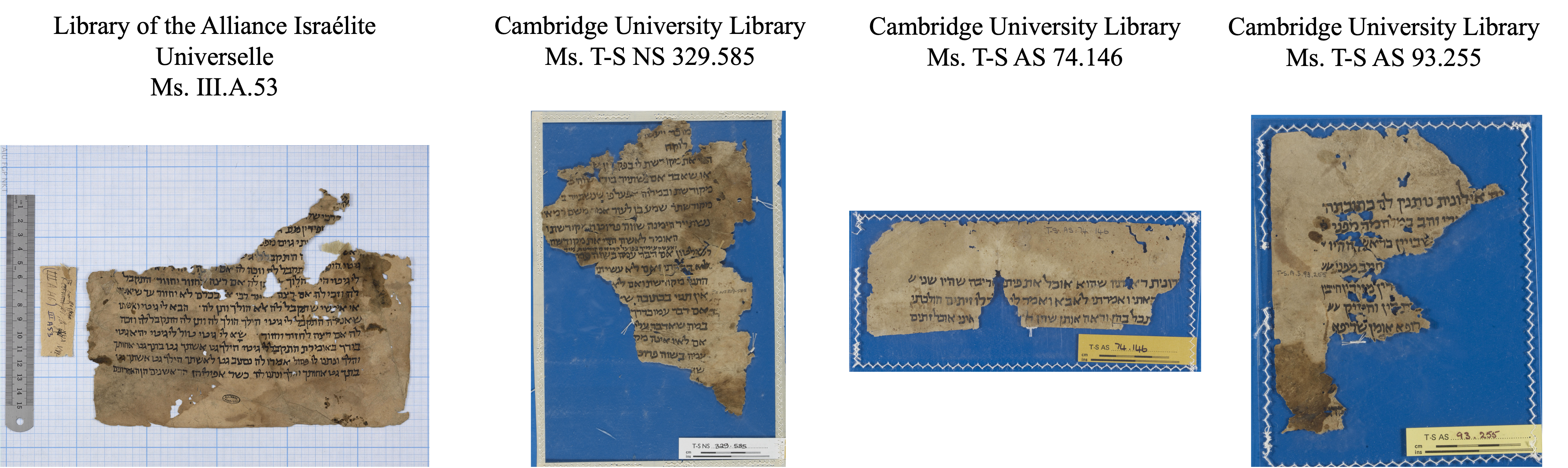}
  
  \caption{Representative join groups from the Cairo Genizah benchmark.
  Each row shows fragments originating from the same physical manuscript,
  held among different institutions worldwide.
  (The blue backgrounds were designed for easy algorithmic segmentation.)}
  \label{fig:all}
  
\end{figure}

A closely related research area is writer identification~\cite{dargan2019writer}, which seeks to determine the identity of the writer of a specific document based on handwriting out of a given set of writers for whom writing samples are supplied.  
Computer vision tools have been developed to assist paleographers directly in scribal hand identification, producing interpretable patch-level similarity maps that
experts can interrogate alongside traditional paleographic
evidence~\cite{grieggs}.
Existing methods are commonly divided into texture-based and grapheme-based approaches. 
Texture-based methods~\cite{bulacu2003writer, bulacu2007automatic, brink2012writer} characterize handwriting using statistical properties of the written trace, such as slant, curvature, and texture, whereas grapheme-based methods extract local structures and map them into a shared feature space. 
The latter can be formed using BoW, which relies on zero-order statistics by counting assignments of local descriptors to the nearest visual words in a predefined vocabulary~\cite{fiel2012writer, xiong2015text} or by richer aggregation schemes such as Fisher vectors, which encode first and second-order deviations of local descriptors from a probabilistic visual vocabulary~\cite{fiel2013writer, jain2014combining}, and VLAD, which aggregates residuals between local descriptors and their nearest visual words~\cite{christlein2015writer, christlein2017unsupervised}. 


\section{Methodology}
\label{sec:method}

\subsection{Problem Formulation}
\label{subsec:problem}

Let $\mathcal{I} = \{I_i\}_{i=1}^N$ be a collection of manuscript fragment images partitioned into join clusters
$\mathcal{C} = \{C_j\}_{j=1}^M$, where $C_j \subseteq \mathcal{I}$ denotes a set of fragment images originating from the same physical manuscript.
Given a query fragment $I_q$, the \emph{retrieval task} is to rank all other images so that members of $C_{j(q)}$, where $j(q)$ denotes the cluster index of $I_q$, appear at top ranks. We evaluate retrieval using Hit@$k$, mAP@$k$, mean reciprocal rank (MRR), and Macro-F1@1.
The implemented system constructs two competing representations in a shared latent space: (i) an image-adaptive \emph{Bag of Bags} (BoB) representation, and (ii) a global codebook Bag of Words (BoW) baseline.
The BoB pipeline proceeds in three stages:
(1)~character-anchored patch extraction via connected component
analysis; (2)~sparse autoencoder encoding of each patch; and
(3)~per-image $k$-means clustering to produce a local prototype
vocabulary.  Fragment similarity is then measured as a set-to-set
distance between per-image vocabularies.

\subsection{Dataset}
\label{subsec:dataset}
Our dataset consists of 287 manuscript fragment images from the Cairo Genizah join benchmark, drawn from multiple institutions: Cambridge University Library (Taylor-Schechter and related collections), the Jewish Theological Seminary (JTS/ENA) in New York, the British Library, and the Alliance Israélite Universelle in Paris. See the samples in \cref{fig:all}. We validate consistency against ground-truth labels across 100 join clusters. 
The dataset is highly imbalanced, with cluster sizes ranging from 2--9 fragments. 
To account for this imbalance, in evaluation, we include Macro-F1@1 alongside Hit@1 and MRR; it averages the per-cluster F1 score, weighting each cluster equally regardless of size. 
The autoencoder is trained on individual patches without join-cluster supervision and is therefore unaffected by cluster-size skew.
The manuscripts are, for the most part, written in Hebrew letters across a variety of languages and script modes, some more square (with mainly disconnected letters) and others more cursive (with many interconnected letters). All experiments use binarized (Otsu-thresholded) grayscale images.
Our experiments are conducted on this manually annotated benchmark subset (287 images, 100 join clusters), drawn from a much larger Genizah collection containing approximately 250--300K fragment images overall. 
We focus on this annotated subset because reliable join labels are currently available at this scale, while the proposed retrieval pipeline is designed to support deployment over the full corpus.

\subsection{Character-Anchored Patch Extraction}
\label{sec:stage1}

For each image, we first binarize. On the resulting image $I_i$, we run 8-connected component analysis
to obtain the component set
$\mathcal{C}_i$. Images whose mean pixel intensity exceeds 128 are inverted so that text is always white (non-zero) for consistent
detection.

\paragraph{Area filtering.}
Given an input manuscript fragment $I$, we isolate individual character instances using connected component analysis. We filter the resulting component set $\mathcal{C}_i$ to exclude noise, background artifacts, and massive text blocks based on pixel area, yielding valid components 
\[
  \widetilde{\mathcal{C}}_i =
  \{\,c \in \mathcal{C}_i : A_{\min} \le \mathrm{area}(c) \le A_{\max}\,\}.
\] 
Thresholds $A_{\min} = 300$ and $A_{\max} = 3000$ were chosen based on visual inspection of the component size distribution: Components below 300 pixels correspond primarily to noise, ink specks, and binarization artifacts, while those above 3000 pixels typically correspond to large ink blobs or partially merged letter clusters that do not represent individual letterforms.

\paragraph{Fit-and-pad normalization.}
Each retained component is scaled so that its longest side equals 60 pixels
while preserving the aspect ratio and is then centered in a zero-padded
64$\times$64 patch. This yields scale-normalized patches while preserving
character geometry. The resulting patch set for image $I_i$ is
$\mathcal{P}_i = \{p_1^i, \ldots, p_{n_i}^i\}$.

\paragraph{Quality filtering.}
Each component bounding-box crop must contain at least 5\% white pixels.
After normalization, the final patch must also contain at least 2\% white
pixels. Pages yielding fewer than 200 valid components after all filters
are excluded.

\begin{figure*}[t]
\centering
\includegraphics[width=0.80\linewidth]{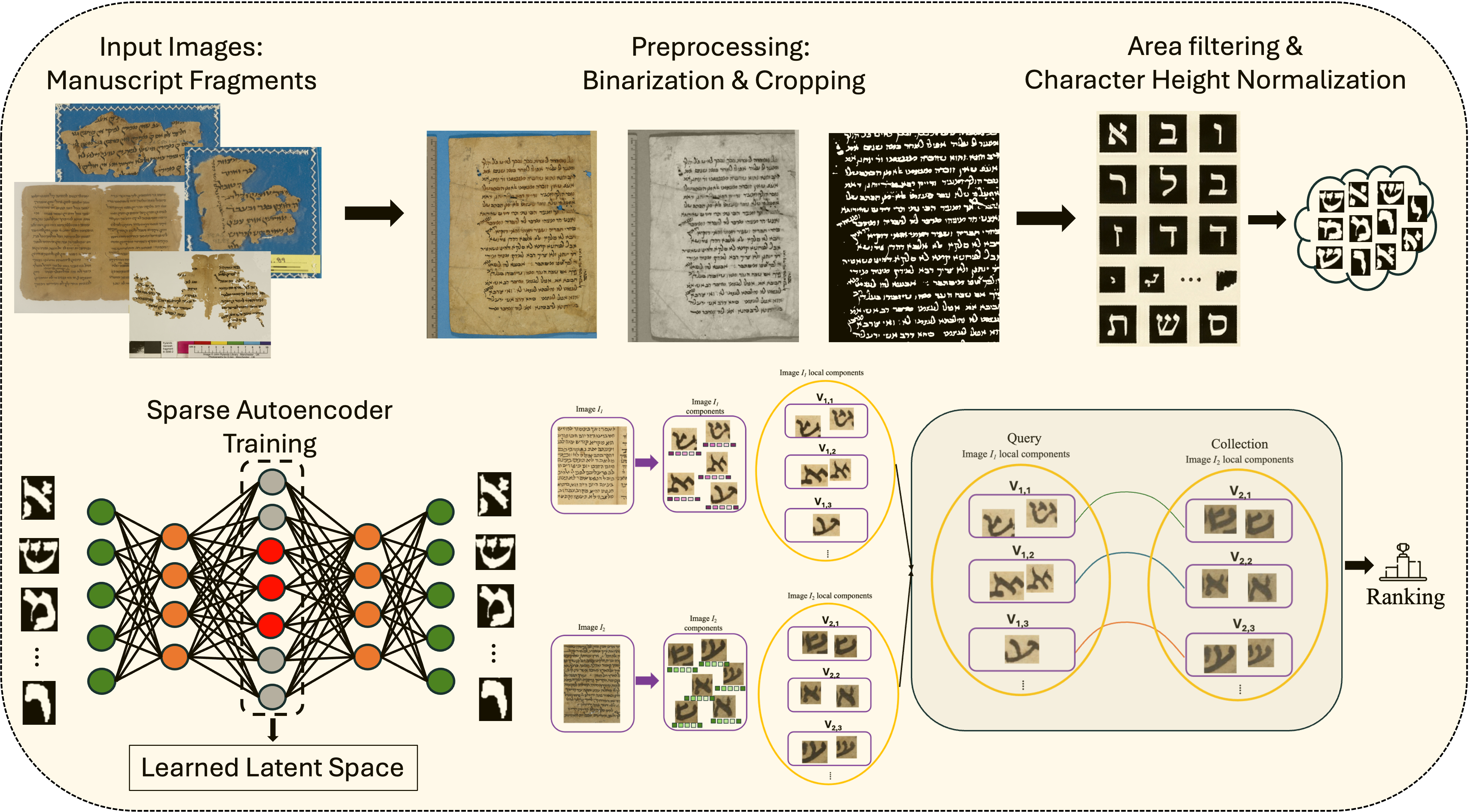}
\caption{Overview of the proposed pipeline.}
\label{fig:pipeline}
\end{figure*}

\subsection{Sparse Convolutional Autoencoder}
\label{subsec:autoencoder}

We encode normalized character patches into a dense feature space using a sparse convolutional autoencoder,
$f_\theta : \mathbb{R}^{64 \times 64} \rightarrow \mathbb{R}^{128}$.
The encoder $f_\theta^{\mathrm{enc}}$ consists of three stride-2 convolutional layers followed by a linear projection. The decoder $f_\theta^{\mathrm{dec}}$ mirrors the encoder using a linear expansion followed by three transposed convolutional layers to reconstruct the input patch.
The full model has $\mathop\approx1.1$M parameters at $d=128$.
The network is trained to minimize the mean-squared reconstruction error with an $L_1$ sparsity penalty on the latent representation:

\begin{equation*}
  \mathcal{L}
  = \underbrace{
      \frac{1}{|\mathcal{P}|}\sum_{p}
      \bigl\|f_\theta^{\mathrm{dec}}(f_\theta(p)) - p\bigr\|_2^2
    }_{\text{reconstruction}}
  + \;\lambda\,
    \underbrace{
      \frac{1}{|\mathcal{P}|}\sum_{p}
      \|f_\theta(p)\|_1
    }_{\text{sparsity}},
  \label{eq:loss}
\end{equation*}
where $\lambda = 10^{-5}$ balances reconstruction fidelity and feature sparsity.
For a fragment $I_i$, this produces a set of continuous embeddings
$E_i = \{f_\theta(p) : p \in \mathcal{P}_i\} \subseteq \mathbb{R}^{128}$.

The autoencoder is trained on patches drawn from manuscript images in the dataset. 
Patches are extracted identically for both training and inference using an aspect-ratio-preserving normalization (\cref{sec:stage1}). 
Specifically, each connected component $c \in \widetilde{\mathcal{C}}_i$ is cropped to its bounding box, resized to a canonical extent while preserving aspect ratio, and centered on a zero-padded $64\times64$ image patch. This ensures that the encoder learns representations directly from the isolated character patches used during retrieval and avoids any mismatch between training and deployment.

\paragraph{Training details.}
For each training image, we sample up to 300 valid component patches and discard patches with insufficient foreground content. The model is trained from scratch for 50 epochs using the Adam optimizer with a learning rate of $10^{-3}$ and a batch size of 256, with early stopping based on the reconstruction loss.

\subsection{The Bag-of-Bags (BoB) Representation}
Unlike BoW, which quantizes $E_i$ against a global codebook, BoB generates a page-adaptive representation. 
We partition $E_I$ into $K=20$ clusters using $K$-means in the main
configuration used for \cref{tab:retrieval} and the runtime analysis; \cref{subsec:ablations} studies sensitivity to $K$.
Let $G_1, \dots, G_K$ denote these clusters, and let $n_I = |E_I|$ be the number of component embeddings for page $I$. The page is represented by a set of prototypes and their corresponding masses:
$
  \mu_a = \frac{1}{|G_a|} \sum_{z \in G_a} z,$
  $\pi_a = {|G_a|}/{n_I}$.
The final BoB representation is the ``bag'' of local prototypes:
$B(I) = \{(\mu_a, \pi_a)\}_{a=1}^K$.

\subsection{Distance Formulation}
\label{sec:distances}

We write $V(I)=\{\mu_a\}_{a=1}^K$ for the support set of centroids in the
weighted BoB representation $B(I)=\{(\mu_a,\pi_a)\}_{a=1}^K$.

Given two manuscript pages $I$ and $J$ with local vocabularies,
$V(I) = \{\mu_a\}_{a=1}^{K}$ and 
  $V(J) = \{\nu_b\}_{b=1}^{K}$,
and normalized cluster-population weights $\boldsymbol{\pi}, \boldsymbol{\rho} \in \Delta^K$, we define three set-to-set distances over their page-adaptive prototypes. 

\paragraph{BoB-Chamfer.}
Each prototype independently finds its nearest match in the opposing vocabulary:
\begin{align*}
 & d_{\mathrm{Chamfer}}(I, J) =\\\nonumber
& \frac{1}{2K}
  \left(
    \sum_{a=1}^{K} \min_{b}\,\|\mu_a - \nu_b\|_2
    \;+\;
    \sum_{b=1}^{K} \min_{a}\,\|\nu_b - \mu_a\|_2
  \right).
  \label{eq:chamfer}
\end{align*}
Unlike bipartite assignment, Chamfer imposes no global bijection between prototypes: each centroid independently contributes its nearest-neighbor cost. This rewards partial style overlap, since a fragment may share only a subset of character types with its join candidate while still receiving a low distance, without penalty for unmatched prototypes. This directly reflects the physical structure of damaged Genizah fragments, where material loss or truncation guarantees that only a subset of character styles will be mutually present. Chamfer is $\mathcal{O}(K^2)$ per pair and achieves the strongest empirical retrieval performance across all metrics (\cref{sec:results}). We note that while symmetric, it does not satisfy the triangle inequality in general.

\paragraph{BoB-Hungarian.}
As a formal geometric alternative, we enforce a strict one-to-one bipartite assignment:
\begin{equation*}
  d_{\mathrm{Hung}}(I, J)
  =
  \frac{1}{K}
  \min_{\sigma \in S_K}
  \sum_{a=1}^{K}
  \|\mu_a - \nu_{\sigma(a)}\|_2,
  \label{eq:hungarian}
\end{equation*}
where $S_K$ denotes the permutation group on $K$ elements. Solved via the Hungarian algorithm on the cost matrix $C[a,b] = \|\mu_a - \nu_b\|_2$, this is exactly the discrete Wasserstein-1 distance $W_1(\mathcal{P}_I, \mathcal{P}_J)$ between uniform empirical distributions $\mathcal{P}_I = \frac{1}{K}\sum_a \delta_{\mu_a}$. Because $W_1$ with an $\ell_2$ ground metric is a valid metric, $d_{\mathrm{Hung}}$ satisfies symmetry, identity of indiscernibles, and the triangle inequality---properties Chamfer lacks. Its per-pair cost is $\mathcal{O}(K^3)$.

\paragraph{BoB-OT (mass-weighted).}
We further incorporate true scribal frequency by replacing uniform weights with normalized cluster populations:
\begin{equation*}
  d_{\mathrm{OT}}(I, J)
  =
  \min_{T \in \Pi(\boldsymbol{\pi},\boldsymbol{\rho})}
  \sum_{a=1}^{K}\sum_{b=1}^{K}
  T_{ab}\,\|\mu_a - \nu_b\|_2,
  \label{eq:ot}
\end{equation*}
where $\Pi(\boldsymbol{\pi}, \boldsymbol{\rho})$ is the set of non-negative transport plans with marginals $\boldsymbol{\pi}$ and $\boldsymbol{\rho}$, solved via the POT library~\cite{pot}. 
A prototype accounting for 40\% of a page's components contributes proportionally more to the distance than one accounting for 2\%. BoB-OT improves over $d_{\mathrm{Hung}}$ (+5.6 pp Hit@1), confirming that prototype mass is informative, though both assignment-based distances are ultimately outperformed by Chamfer in this highly-degraded domain.

\paragraph{Theoretical guarantee for BoB-OT.}
While Chamfer excels empirically due to the partial-overlap
structure of manuscript fragments, BoB-OT admits a formal
approximation guarantee that grounds the use of prototype-level
matching and motivates the vocabulary-size ablation.

\def\theproposition{\!}
\begin{proposition}\label{prop:quantization}
For any two pages, $I$ and $J$, let
\[
\begin{aligned}
\mathcal{P}_I &= \frac{1}{n_I}\sum\nolimits_{i=1}^{n_I}\delta_{z_i},
\qquad
\widetilde{\mathcal{P}}_I = \sum\nolimits_{a=1}^{K}\pi_a\,\delta_{\mu_a},\\
\varepsilon_I &= \frac{1}{n_I}\sum\nolimits_{i=1}^{n_I}\|z_i-\mu_{a(i)}\|_2,
\end{aligned}
\]
and define $\mathcal{P}_J$, $\widetilde{\mathcal{P}}_J$, and $\varepsilon_J$
analogously for page $J$. Then
$
\bigl|W_1(\mathcal{P}_I,\mathcal{P}_J) - W_1(\widetilde{\mathcal{P}}_I,\widetilde{\mathcal{P}}_J)\bigr|
\;\le\; \varepsilon_I + \varepsilon_J.
$
\end{proposition}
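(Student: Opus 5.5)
The plan is to use the fact that $W_1$ with the $\ell_2$ ground metric is a metric on probability measures with finite first moment. We then bound the quantization error $W_1(\mathcal{P}_I,\widetilde{\mathcal{P}}_I)$ directly by exhibiting an explicit coupling. The triangle inequality applied twice gives
\[
W_1(\mathcal{P}_I,\mathcal{P}_J) \le W_1(\mathcal{P}_I,\widetilde{\mathcal{P}}_I) + W_1(\widetilde{\mathcal{P}}_I,\widetilde{\mathcal{P}}_J) + W_1(\widetilde{\mathcal{P}}_J,\mathcal{P}_J),
\]
and symmetrically
\[
W_1(\widetilde{\mathcal{P}}_I,\widetilde{\mathcal{P}}_J) \le W_1(\widetilde{\mathcal{P}}_I,\mathcal{P}_I) + W_1(\mathcal{P}_I,\mathcal{P}_J) + W_1(\mathcal{P}_J,\widetilde{\mathcal{P}}_J).
\]
Together these yield $|W_1(\mathcal{P}_I,\mathcal{P}_J) - W_1(\widetilde{\mathcal{P}}_I,\widetilde{\mathcal{P}}_J)| \le W_1(\mathcal{P}_I,\widetilde{\mathcal{P}}_I) + W_1(\mathcal{P}_J,\widetilde{\mathcal{P}}_J)$. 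It therefore suffices to show $W_1(\mathcal{P}_I,\widetilde{\mathcal{P}}_I)\le\varepsilon_I$, and likewise for $J$.

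For that bound, use the deterministic coupling induced by the cluster assignment map $z_i\mapsto\mu_{a(i)}$. Define $T=\frac{1}{n_I}\sum_{i}\delta_{(z_i,\mu_{a(i)})}$. Its first marginal is clearly $\mathcal{P}_I$. Its second marginal puts mass $|G_a|/n_I=\pi_a$ on $\mu_a$, because exactly $|G_a|$ points are assigned to cluster $a$; hence the second marginal is $\widetilde{\mathcal{P}}_I$. If some centroids coincide, their masses simply add, which is consistent with the definition of $\widetilde{\mathcal{P}}_I$ as a sum of weighted Diracs. The transport cost of $T$ is $\frac{1}{n_I}\sum_i\|z_i-\mu_{a(i)}\|_2=\varepsilon_I$. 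Since $W_1$ is an infimum over couplings, $W_1(\mathcal{P}_I,\widetilde{\mathcal{P}}_I)\le\varepsilon_I$.

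The only point needing care is justifying that $W_1$ satisfies the triangle inequality for general discrete measures with unequal weights. The paper invokes this only for uniform measures in the Hungarian discussion. Here I would cite the standard gluing-lemma result that $W_p$ is a metric on $\mathcal{P}_p(\mathbb{R}^{d})$, which applies since all measures are finitely supported. Beyond that, the argument is routine. Note that the optimality of $k$-means plays no role: any assignment of the $z_i$ to centroids with matching masses gives the bound, and the centroid choice only affects how small $\varepsilon_I$ is.
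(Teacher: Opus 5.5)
Your proof is correct and matches the paper's argument: the paper also bounds $W_1(\mathcal{P}_I,\widetilde{\mathcal{P}}_I)\le\varepsilon_I$ using the feasible plan given by the assignment $z_i\mapsto\mu_{a(i)}$, then concludes with the triangle inequality for $W_1$. Your explicit marginal check and your appeal to the gluing lemma for general weighted discrete measures fill in details the paper leaves implicit.
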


\begin{proof}
Assigning each $z_i$ to its centroid $\mu_{a(i)}$ defines a feasible transport
plan from $\mathcal{P}_I$ to $\widetilde{\mathcal{P}}_I$, so
$W_1(\mathcal{P}_I,\widetilde{\mathcal{P}}_I) \leq \varepsilon_I$.
The triangle inequality for $W_1$  gives the desired bound.
\end{proof}

This shows that BoB-OT approximates full component-level optimal
transport with error controlled by the within-page $K$-means quantization quality.
Larger $K$ reduces $\varepsilon_I$ and $\varepsilon_J$, tightening this approximation---as confirmed
empirically (\cref{tab:k_ablation}).
BoB-Chamfer achieves stronger retrieval performance due to the
partial-overlap structure of damaged fragments, while BoB-OT provides a
principled mass-aware alternative with formal approximation guarantees.

\subsection{Bag-of-Words (BoW) Baseline}
\label{sec:bow}
We evaluate two BoW variants that share the same encoder as BoB 
but differ in how the global codebook is constructed.
\textbf{BoW-centroids} pools the page-level local visual words 
(BoB prototypes, weighted by cluster population) into a global codebook.
\textbf{BoW-RawPatches} constructs the global codebook directly 
from all raw component embeddings across the dataset, bypassing 
per-page clustering entirely---this is the standard Bag-of-Visual-Words 
formulation and serves as the primary BoW comparator for BoB.
For each page $I$, let $\{(\mu_a,m_a)\}_{a=1}^{K}$ denote its local prototypes and their cluster populations, where $m_a$ is the number of component embeddings assigned to prototype $\mu_a$ and $n_I=\sum_{a=1}^{K} m_a$. 
We fit a global $k$-means codebook,
$C=\{c_r\}_{r=1}^{K_g}$ ($K_g=100$),
on the pooled set of local prototypes, weighted by their populations $m_a$ (equivalently, by repeating each prototype according to its cluster size).

Each page is then represented by a tf-idf weighted histogram $h_I \in \mathbb{R}^{K_g}$. Its term-frequency component is
\begin{equation*}
\mathrm{tf}_I[r]
=
\frac{1}{n_I}
\sum\nolimits_{a=1}^{K}
m_a\,\mathbb{1}(\operatorname{nn}(\mu_a)=c_r),
\label{eq:tf}
\end{equation*}
where  $\operatorname{nn}(\mu_a)$ denotes the nearest global codeword to $\mu_a$, and the indicator function $\mathbb{1}(\cdot)$ equals 1 if its argument holds and 0 otherwise.
The inverse-document-frequency term is
\begin{equation*}
\mathrm{idf}[r]
=
\log\!\left(\frac{N+1}{|\{I:\mathrm{tf}_I[r]>0\}|+1}\right)+1,
\label{eq:idf}
\end{equation*}
where $N$ is the number of pages in the corpus. 
We use a smoothed inverse-document-frequency term, which avoids zero weights for codewords appearing in all pages and remains well-defined if a codeword has zero document frequency. The final representation is obtained by tf-idf weighting followed by $\ell_2$ normalization.

We compare pages via distances between their final page-level BoW representations: Euclidean, cosine, $\chi^2$, and Hellinger.
For Hellinger distance, histograms are first renormalized to unit $\ell_1$ mass. This baseline provides a meaningful shared-vocabulary comparator while discarding the page-specific prototype geometry retained by BoB.

\subsection{Retrieval, Reranking and Complexity} \label{subsec:retrieval} 
For each query page, all other pages are ranked in ascending order of distance, excluding the query page itself. A page is considered relevant if it belongs to the same join cluster as the query. We report Hit@$k$, mAP@$k$, MRR, and Macro-F1@1. Query pages with no positive mate in the dataset are excluded from metric aggregation. 

\paragraph{Computational cost.} BoW comparison costs $\mathcal{O}(K_g)$ per pair once histograms are built ($K_g=100$ operations). BoB-Chamfer costs $\mathcal{O}(K^2)$ (400 operations at $K=20$) and BoB-Hungarian/OT cost $\mathcal{O}(K^3)$ (8000 operations)---a $4\times$ to $80\times$ overhead per pair relative to BoW. This overhead is the price of expressiveness: without per-image quantization, directly matching all $n_I \geq 200$ raw component embeddings via OT would cost $\mathcal{O}(n^3) \approx 10^6$--$10^9$ operations per pair, making gallery-scale retrieval infeasible. At our operating point ($K=20$), the full precomputed distance matrix builds in under five seconds; precomputed lookup at query time is effectively free for all methods. 

\paragraph{Two-stage retrieval.} For larger manuscript collections, where computing or storing a full $N{\times}N$ BoB-OT matrix becomes expensive---BoB-OT costs $\mathcal{O}(K^3)$ per pair versus $\mathcal{O}(K_g)$ for BoW---we propose a two-stage pipeline: BoW-Cosine retrieves the top-$M$ candidates efficiently, and BoB-OT reranks only that shortlist. We evaluate with $M{=}30$, reducing online cost to $\mathcal{O}(M{\cdot}K^3)$ per query independent of gallery size. 
See \cref{fig:pipeline}.
\Cref{tab:retrieval} shows that the two-stage pipeline matches or improves exhaustive BoB-OT across all reported ranking metrics on this benchmark.

\section{Results}
\label{sec:results}
We evaluated retrieval strategies on our benchmark dataset.

\subsection{Retrieval Performance}
\label{subsec:retrieval_results}

\Cref{tab:retrieval} reports the full retrieval comparison across three
families of baselines. \textbf{Pooling baselines} (MeanPool, MaxPool) aggregate
all component embeddings into a single page vector without any clustering;
MaxPool performs poorly (Hit@1 $\leq$ 0.31), confirming that the element-wise
maximum is not an informative aggregation for manuscript embeddings.
MeanPool-Cosine is competitive at 0.545 Hit@1, but substantially below BoB,
showing that a simple global average of component embeddings cannot substitute
for structured page-level vocabulary matching.
\textbf{BoW-centroids} constructs a global codebook over page-level BoB
prototypes; its best variant (L2) achieves 0.545 Hit@1.
\textbf{BoW-RawPatches} is the standard Bag-of-Visual-Words formulation applied
directly to raw component embeddings without any per-page clustering stage;
its strongest variant ($\chi^2$) reaches 0.739 Hit@1 and 0.800 MRR, making it
the most competitive baseline.

The main result is consistent across all ranking metrics: page-adaptive
BoB representations outperform every baseline. BoB-Chamfer achieves the highest
overall performance at 0.784 Hit@1 and 0.841 MRR, improving over the strongest
BoW baseline (BoW-RawPatches-$\chi^2$, 0.739 Hit@1) by +4.5\% absolute (+6.1\% relative), and over the pooling baselines by a larger margin.
Crucially, BoW-centroids is substantially weaker than BoW-RawPatches despite
sharing the same encoder, showing that the gain of BoB does not come from the
encoder alone, but from the combination of page-adaptive clustering and
set-level prototype matching.

Among adaptive distances, BoB-Chamfer outperforms both BoB-Hungarian-L2
and BoB-OT by 0.079 and 0.023 at Hit@1 respectively.
This suggests that in the fragment setting, soft nearest-neighbor matching
is more robust than strict one-to-one assignment: Paired fragments often share
only a subset of local patterns due to truncation and damage, and Chamfer's partial-overlap formulation rewards shared style without penalizing unmatched prototypes.

\begin{table*}[t]
\centering
\caption{%
  Full retrieval comparison on the Genizah join-retrieval benchmark.
  Methods are grouped into pooling baselines, shared-vocabulary BoW baselines
  (BoW-RawPatches: global codebook over raw component embeddings;
   BoW-centroids: global codebook over page-level BoB prototypes),
  and page-adaptive BoB distances.
  Higher is better for all metrics.
  \textbf{Boldface} indicates the best value per column.
}
\label{tab:retrieval}
\resizebox{0.95\textwidth}{!}{
\begin{tabular}{llcccccccc}
\toprule
\textbf{Family} & \textbf{Method} & \textbf{Hit@1} & \textbf{mAP@1} &
\textbf{Hit@5} & \textbf{mAP@5} & \textbf{Hit@10} & \textbf{mAP@10} &
\textbf{MRR} & \textbf{MacroF1@1} \\
\midrule
\multirow{2}{*}{Pooling}
  & MeanPool-Cosine   & 0.545 & 0.545 & 0.773 & 0.469 & 0.841 & 0.475 & 0.642 & 0.384 \\
  & MaxPool-L2        & 0.307 & 0.307 & 0.534 & 0.288 & 0.648 & 0.300 & 0.424 & 0.187 \\
\midrule
\multirow{4}{*}{BoW-centroids}
  & L2                & 0.545 & 0.545 & 0.682 & 0.506 & 0.784 & 0.516 & 0.619 & 0.397 \\
  & Cosine            & 0.534 & 0.534 & 0.739 & 0.509 & 0.841 & 0.520 & 0.617 & 0.354 \\
  & $\chi^2$           & 0.511 & 0.511 & 0.716 & 0.510 & 0.830 & 0.525 & 0.622 & 0.352 \\
  & Hellinger         & 0.500 & 0.500 & 0.716 & 0.505 & 0.830 & 0.519 & 0.616 & 0.344 \\
\midrule
\multirow{4}{*}{BoW-RawPatches}
  & L2                & 0.659 & 0.659 & 0.795 & 0.562 & 0.852 & 0.564 & 0.730 & 0.543 \\
  & Cosine            & 0.670 & 0.670 & 0.841 & 0.605 & 0.898 & 0.611 & 0.752 & 0.516 \\
  & $\chi^2$           & 0.739 & 0.739 & 0.852 & 0.649 & 0.898 & 0.655 & 0.800 & 0.584 \\
  & Hellinger         & 0.705 & 0.705 & 0.841 & 0.631 & 0.909 & 0.640 & 0.774 & 0.560 \\
\midrule
\multirow{3}{*}{BoB (ours)}
  & Hungarian         & 0.705 & 0.705 & 0.864 & 0.665 & 0.909 & 0.660 & 0.771 & 0.580 \\
  & OT (mass-weighted)& 0.761 & 0.761 & 0.875 & 0.671 & 0.886 & 0.668 & 0.814 & 0.615 \\
  & \textbf{Chamfer}  & \textbf{0.784} & \textbf{0.784} & \textbf{0.898} &
    \textbf{0.734} & \textbf{0.932} & \textbf{0.736} & \textbf{0.841} & \textbf{0.675} \\
\midrule
Two-stage & BoW-Cosine $\rightarrow$ \\
& BoB-OT ($M$=30)
  & 0.771 & 0.771 & 0.875 & 0.691 & 0.901 & 0.668 & 0.814 & 0.615 \\
\bottomrule
\end{tabular}
}
\end{table*}

\subsection{Distance-Space Separation}
\label{subsec:separation}
To probe whether fragment adaptive vocabularies induce a more separable retrieval geometry, we compare the distributions of intra-cluster and inter-cluster distances.
\Cref{tab:separation} reports this analysis for BoW-Cosine and BoB-Hungarian-L2. BoB-Hungarian-L2 shows a larger intra/inter-cluster mean gap (0.343 vs.\@  0.270), slightly stronger KS separation (0.665 vs.\ 0.641), and nearly identical AUC separation (0.902 vs.\@  0.899). We use Hungarian-L2 for this analysis because, as a valid metric (\cref{sec:distances}), it provides a geometrically principled distance space for comparing intra- and inter-cluster distance distributions. We view this as supporting diagnostic evidence for the broader BoB design, rather than as a complete explanation of the strongest retrieval results, which are achieved by BoB-Chamfer.

\begin{table}[t]
\centering
\caption{Distance-separation statistics for intra-cluster vs.\@ inter-cluster
pairs. Larger mean gap, KS statistic, and AUC separation indicate stronger
distributional separation.}
\label{tab:separation}
\setlength{\tabcolsep}{2pt}
\resizebox{\columnwidth}{!}{
\begin{tabular}{lcccccc}
\toprule
\textbf{Method} & \textbf{Intra} & \textbf{Inter} & \textbf{Gap}
  & \textbf{KS} & \textbf{AUC} & \textbf{Cohen's $d$} \\
\midrule
BoW-Cosine        & 0.347 & 0.617 & 0.270 & 0.641 & 0.899 & 1.897 \\
BoB-Hungarian-L2  & 1.735 & 2.078 & 0.343 & 0.665 & 0.902 & 1.831 \\
\bottomrule
\end{tabular}
}
\end{table}

\subsection{Ablation Studies}
\label{subsec:ablations}

We examine sensitivity of retrieval performance to the main design choices: vocabulary size $K$, latent dimension $d$, sparsity regularization, and component normalization. All ablations use BoB-Hungarian-L2 unless stated otherwise.

\paragraph{Local vocabulary size $K$.}
\Cref{tab:k_ablation} varies the number of page-adaptive visual
words for $K=8\mathop{..}64$ at fixed $d=128$, sparsity enabled.
Performance improves until $K=32$, then degrades at $K=64$.
On this benchmark subset, $K=32$ yields the strongest retrieval metrics. The main results and runtime analysis use $K=20$; we treat
\cref{tab:k_ablation} as a sensitivity study rather than as
evidence of a single universally optimal operating point, since the
best $K$ depends on script complexity and collection scale (see Section~\ref{sec:diss}). 

\begin{table}[t]
\centering
\caption{Ablation on local vocabulary size $K$ for BoB-Hungarian-L2 ($d=128$, sparsity enabled). (MF1\,=\,MacroF1)}
\label{tab:k_ablation}
\setlength{\tabcolsep}{2.5pt}
\begin{tabular}{r|cccccc}
\toprule
\textbf{$K$} & \textbf{Hit@1} & \textbf{mAP@1} & \textbf{Hit@5} &
  \textbf{mAP@5} & \textbf{MRR} & \textbf{MF1@1} \\
\midrule
8  & 0.648 & 0.648 & 0.818 & 0.600 & 0.739 & 0.541 \\
16 & 0.705 & 0.705 & 0.841 & 0.650 & 0.767 & 0.569 \\
20 & 0.705 & 0.705 & 0.864 & 0.665 & 0.771 & 0.580 \\
\textbf{32} & \textbf{0.773} & \textbf{0.773} & \textbf{0.864} &
  \textbf{0.689} & \textbf{0.813} & \textbf{0.647} \\
64 & 0.705 & 0.705 & 0.830 & 0.658 & 0.770 & 0.599 \\
\bottomrule
\end{tabular}
\end{table}

\paragraph{Latent dimension $d$.}
\Cref{tab:d_ablation} varies the encoder dimension while retraining from
scratch. Performance peaks at $d=128$; increasing to $256$ does not improve
and slightly degrades retrieval, suggesting that 128-dimensional embeddings
already capture the discriminative structure of the character embedding space.
We use $d=128$ as the default.

\begin{table}[t]
\centering
\caption{Ablation on latent dimension $d$ (K=32, sparsity enabled).}
\label{tab:d_ablation}
\setlength{\tabcolsep}{2.5pt}
\begin{tabular}{r|cccccc}
\toprule
\textbf{$d$} & \textbf{Hit@1} & \textbf{mAP@1} & \textbf{Hit@5} &
  \textbf{mAP@5} & \textbf{MRR} & \textbf{MF1@1} \\
\midrule
64  & 0.705 & 0.705 & 0.807 & 0.642 & 0.756 & 0.578 \\
\textbf{128} & \textbf{0.773} & \textbf{0.773} & \textbf{0.864} &
  \textbf{0.689} & \textbf{0.813} & \textbf{0.647} \\
256 & 0.648 & 0.648 & 0.807 & 0.604 & 0.719 & 0.546 \\
\bottomrule
\end{tabular}
\end{table}

\paragraph{Activation sparsity regularization.}
\Cref{tab:sparsity_ablation} compares training with and without the
$\ell_1$ latent penalty. 
Enabling sparsity improves Hit@1 ($+0.012$), mAP@1 ($+0.012$), Hit@5
($+0.023$), mAP@5 ($+0.017$), and MRR ($+0.008$). MacroF1@1 also improves substantially under sparsity ($+0.052$). Overall, the sparse model is superior
across all reported metrics; we retain the $\ell_1$ penalty in the default
configuration.

\begin{table}[t]
\centering
\caption{Ablation on $\ell_1$ sparsity regularization for
BoB-Hungarian-L2 ($K=32$, $d=128$).}
\label{tab:sparsity_ablation}
\setlength{\tabcolsep}{1pt}
\begin{tabular}{l|cccccc}
\toprule
\textbf{Sparsity} & \textbf{Hit@1} & \textbf{mAP@1} & \textbf{Hit@5} &
  \textbf{mAP@5} & \textbf{MRR} & \textbf{MF1@1} \\
\midrule
\textbf{On}  & \textbf{0.773} & \textbf{0.773} & \textbf{0.864} &
  \textbf{0.689} & \textbf{0.813} & \textbf{0.647} \\
Off & 0.761 & 0.761 & 0.841 & 0.672 & 0.805 & 0.595 \\
\bottomrule
\end{tabular}
\end{table}

\paragraph{Aspect-ratio-preserving normalization.}
\Cref{tab:fitpad_ablation} compares aspect-ratio-preserving patch
normalization against direct isotropic resizing to $64\times64$.
Preserving aspect ratio improves Hit@1, Hit@5, mAP@5, and MRR.
Direct resizing stretches characters to fill the patch uniformly, distorting
width-to-height ratios that are discriminative for Hebrew letterforms.
We adopt aspect-ratio-preserving normalization as the default preprocessing strategy for both training and retrieval.

\begin{table}[t]
\centering
\caption{Ablation on component patch normalization for BoB-Hungarian-L2.
\emph{Preserved}: aspect-ratio-preserving scale-and-pad.
\emph{Stretched}: direct isotropic resize to $64\times64$.}
\label{tab:fitpad_ablation}
\setlength{\tabcolsep}{1pt}
\begin{tabular}{l|cccccc}
\toprule
\textbf{Method} & \textbf{Hit@1} & \textbf{mAP@1} & \textbf{Hit@5} &
  \textbf{mAP@5} & \textbf{MRR} & \textbf{MF1@1} \\
\midrule
\textbf{Preserved} & \textbf{0.739} & \textbf{0.739} & \textbf{0.841} &
  \textbf{0.671} & \textbf{0.789} & 0.592 \\
Stretched          & 0.716 & 0.716 & 0.830 & 0.652 & 0.778 & \textbf{0.603} \\
\bottomrule
\end{tabular}
\end{table}

\begin{exclude}
\subsection{Computational Trade-Offs}
\label{subsec:complexity_results}

BoW reduces each page to a fixed-length histogram and is therefore cheaper to compare at query time. In contrast, BoB requires set-to-set matching between page-specific prototype vocabularies, making Chamfer, Hungarian matching, and OT progressively more expensive. The purpose of per-page quantization is not to match BoW in cost, but to make prototype-level matching practical relative to raw component-level comparison.

This trade-off motivates the two-stage retrieval setting in \Cref{tab:retrieval}: BoW provides efficient candidate generation, while BoB-OT is applied only to a short list of candidates. At the scale of the current benchmark, full pairwise distance precomputation is inexpensive; for larger galleries, reranking provides a more realistic deployment mode than exhaustive online OT.
\end{exclude}

\subsection{Computational Trade-Offs}
\label{subsec:complexity_results}

\Cref{tab:runtime} reports query-time cost. Precomputed matrix lookup
is effectively free ($\mathop<0.01$\,ms/query) for all methods.
On-the-fly BoB-Hungarian costs $\mathop\approx 10$\,ms/query due to the
$\mathcal{O}(K^3)$ assignment, making exhaustive online matching impractical
for large galleries. The two-stage pipeline (BoW-Cosine
$\rightarrow$ BoB-OT rerank, $M$=30) costs $\mathop\approx 11$\,ms/query but
scales as $\mathcal{O}(M{\cdot}K^3)$ \emph{independent of gallery size},
providing a practical deployment mode for large collections without
requiring a fully precomputed BoB distance matrix.

\begin{table}[t]
\centering
\caption{Query-time runtime profiling. Precomputed lookup is free for all
methods; on-the-fly assignment costs scale with $K$.}
\label{tab:runtime}
\setlength{\tabcolsep}{2pt}
\begin{tabular}{lcc}
\toprule
\textbf{Method} & \textbf{ms/query} & \textbf{Scales w/ $N$} \\
\midrule
BoW-Cosine (precomputed)              & 0.006  & No  \\
BoB-Hungarian (precomputed)           & 0.006  & No  \\
BoB-Hungarian (on-the-fly)            & 10.30  & Yes \\
BoW-Cosine $\rightarrow$ BoB-OT ($M$=30) & 11.33 & No  \\
\bottomrule
\end{tabular}
\end{table}

\section{Discussion}
\label{sec:diss}\label{sec:conclusion}

We have presented Bag-of-Bags (BoB), a manuscript join retrieval framework that replaces a shared global codebook with page-adaptive vocabularies built from sparse autoencoder embeddings of connected-component patches. Pages are compared through set-based distances over these local prototypes, enabling retrieval that is sensitive to page-specific visual structure rather than global frequency statistics alone. We further showed that the mass-weighted BoB-OT variant admits a formal approximation guarantee: its deviation from full component-level optimal transport is bounded by the sum of the two pages' $k$-means quantization errors. On our Genizah benchmark, the best variant, BoB‑Chamfer, achieves a Hit@1 of 78.4\%, a relative improvement of 6.1\% over the strongest BoW baseline. Overall, the results show that, for manuscript join retrieval, modeling how a page organizes its local visual modes is more effective than forcing all pages into a single shared vocabulary. More broadly, adaptive set-based representations appear promising for retrieval problems involving partial, noisy, and heterogeneous visual evidence.
These results are consistent with the intuition that page-adaptive vocabularies preserve page-specific local structure that may be blurred by shared-codebook frequency representations.

\paragraph{Chamfer vs.\@ assignment-based distances.}
Among BoB variants, Chamfer outperforms both Hungarian and OT
despite having weaker formal structure. A plausible
explanation is that fragment pairs often share only a subset
of local character patterns: Chamfer's nearest-neighbor
formulation rewards this partial overlap without penalizing
unmatched prototypes, whereas one-to-one assignment forces
every prototype to contribute regardless of whether it has a
meaningful counterpart on the opposite page. This suggests
that for partial-document retrieval, soft set matching may be
more robust than globally constrained correspondence.

\paragraph{Limitations.}
The evaluation is constrained to a manually annotated subset of 287 images forming 100 join clusters, reflecting the difficulty of obtaining verified, high-confidence ground-truth joins from domain experts. 
The complete Genizah corpus contains over 250,000 fragments, and the pipeline is designed with that full-corpus setting in mind. 
We expect the per-image vocabulary construction to remain advantageous at scale: adapting to each fragment's local character distribution rather than a fixed global codebook becomes more valuable as stylistic diversity increases. 
Empirical validation on a larger annotated sample remains an important direction for future work, but awaits the accumulation of ground truth. 
We focus on BoW-family baselines sharing the same sparse encoder so as to isolate the effect of page-adaptive vocabulary construction; comparison against more recent learned retrieval models is another important direction for the future.
Extending the pipeline to smaller fragments and more heavily degraded pages is a natural
direction for future work. 
On our benchmark test set, $K=32$, the approximate number of  graphemes for those texts, yielded the strongest retrieval metrics,
at non-negligible cost compared to the $K=20$ used in most of our experiments.
It remains to be seen what the ideal number is for the complete corpus.
It may in fact be best to form $K'>K$ clusters and retain only the
$K$ most populated prototypes for matching, decoupling vocabulary richness from matching cost.


It would be interesting to investigate to what extent image-adaptive prototype vocabularies are also beneficial in other computer-vision settings that depend on subtle local visual cues, such as fine-grained recognition. In such problems, instances may be globally similar while differing in small local patterns, and matching may rely on partial correspondences rather than full global appearance agreement. This makes them a natural setting in which to study whether set-to-set matching over image-specific prototype vocabularies might provide an advantage over shared global-codebook representations.

\section*{Acknowledgments}
We are grateful to the Friedberg Genizah Project (FGP) and Dr.\@ Roni Shweka for their contribution to the construction of the dataset. 
This research was funded in part by the European Union (ERC, MiDRASH, Project No.\@ 101071829. Principal investigators: Nachum Dershowitz, Tel Aviv University; Judith Olszowy-Schlanger, EPHE-PSL;
Avi Shmidman, Bar-Ilan University;
and Daniel Stökl Ben Ezra, EPHE-PSL).
Views and opinions expressed are, however, those of the authors only and do not necessarily reflect those of the European Union or the European Research Council Executive Agency. Neither the European Union nor the granting authority can be held responsible for them.

{
    \small
    \bibliographystyle{ieeenat_fullname}
    \bibliography{main}
}


\end{document}